\definecolor{codegreen}{rgb}{0,0.6,0}
\definecolor{codegray}{rgb}{0.5,0.5,0.5}
\definecolor{codepurple}{rgb}{0.58,0,0.82}
\definecolor{backcolour}{rgb}{0.95,0.95,0.95}
\lstdefinestyle{mystyle}{
    backgroundcolor=\color{backcolour},   
    commentstyle=\color{codegreen},
    keywordstyle=\color{magenta},
    numberstyle=\tiny\color{codegray},
    stringstyle=\color{codepurple},
    basicstyle=\ttfamily\footnotesize,
    breakatwhitespace=false,         
    breaklines=true,                 
    captionpos=b,                    
    keepspaces=true,                 
    numbersep=5pt,                  
    showspaces=false,                
    showstringspaces=false,
    showtabs=false,                  
    tabsize=2
}
\crefname{section}{Sec.}{Secs.}
\Crefname{section}{Section}{Sections}
\Crefname{table}{Table}{Tables}
\crefname{table}{Tab.}{Tabs.}
\def\eqref#1{equation~\ref{#1}}
\def\1{\bm{1}}
\def\vd{{\bm{d}}}
\def\vf{{\bm{f}}}
\def\vg{{\bm{g}}}
\def\vh{{\bm{h}}}
\def\vp{{\bm{p}}}
\def\vq{{\bm{q}}}
\def\vx{{\bm{x}}}
\def\vy{{\bm{y}}}
\def\vz{{\bm{z}}}
\def\mH{{\bm{H}}}
\def\mX{{\bm{X}}}
\def\mY{{\bm{Y}}}
\def\mZ{{\bm{Z}}}
\DeclareMathAlphabet{\mathsfit}{\encodingdefault}{\sfdefault}{m}{sl}
\SetMathAlphabet{\mathsfit}{bold}{\encodingdefault}{\sfdefault}{bx}{n}
\newcommand{\softmax}{\mathrm{softmax}}
\title{MaskCon: Masked Contrastive Learning for Coarse-Labelled Dataset}
\newtheorem{theorem}{\textsc{Theorem}}
\begin{document}

\author{Chen Feng, Ioannis Patras\\
Queen Mary University of London, UK\\
{\tt\small \{chen.feng, i.patras\}@qmul.ac.uk}
}
\maketitle

\begin{abstract}
Deep learning has achieved great success in recent years with the aid of advanced neural network structures and large-scale human-annotated datasets. However, it is often costly and difficult to accurately and efficiently annotate large-scale datasets, especially for some specialized domains where fine-grained labels are required. In this setting, coarse labels are much easier to acquire as they do not require expert knowledge. In this work, we propose a contrastive learning method, called \textbf{mask}ed \textbf{con}trastive learning~(\textbf{MaskCon}) to address the under-explored problem setting, where we learn with a coarse-labelled dataset in order to address a finer labelling problem. More specifically, within the contrastive learning framework, for each sample our method generates soft-labels  with the aid of coarse labels against other samples and another augmented view of the sample in question. 
By contrast to self-supervised contrastive learning where only the sample's augmentations are considered hard positives, and in supervised contrastive learning where only samples with the same coarse labels are considered hard positives, we propose soft labels based on sample distances, that are masked by the coarse labels. This allows us to utilize both inter-sample relations and coarse labels. 
We demonstrate that our method can obtain as special cases many existing state-of-the-art works and that it provides tighter bounds on the generalization error. 
Experimentally, our method achieves significant improvement over the current state-of-the-art in various datasets, including CIFAR10, CIFAR100, ImageNet-1K, Standford Online Products and Stanford Cars196 datasets. Code and annotations are available at \url{https://github.com/MrChenFeng/MaskCon_CVPR2023}.
\end{abstract}

\begin{figure}[htbp]
    \centering
    \includegraphics[width=1\linewidth]{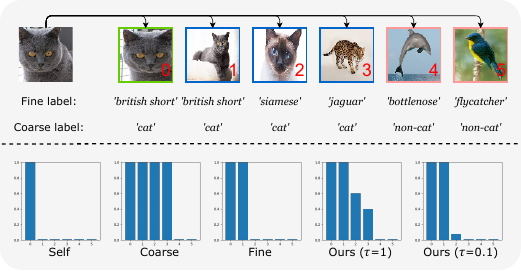}
\caption{Contrastive learning sample relations using MaskCon (ours) and other learning paradigms when only coarse labels are available. MaskCon are closer to the fine ones.}
    \label{fig:intro_mechanism_toy}
\end{figure}

\section{Introduction}
Supervised learning with deep neural networks has achieved great success in various computer vision tasks such as image classification, action detection and object localization. However, the success of supervised learning relies on large-scale and high-quality human-annotated datasets, whose annotations are time-consuming and labour-intensive to produce. To avoid such reliance, various learning frameworks have been proposed and investigated: Self-supervised learning aims to learn meaningful representations with heuristic proxy visual tasks, such as rotation prediction~\cite{gidaris2018rotpredict} and the more prevalent instance discrimination task, the latter, being widely applied in self-supervised contrastive learning framework; semi-supervised learning usually considers a dataset for which only a small part is annotated -- within this setting, pseudo labelling methods~\cite{pseudolabelling} and consistency regularization techniques~\cite{mixmatch, fixmatch} are typically used; Moreover, learning using more accessible but noisy data, such as web-crawled data, has also received increasing attention~\cite{ssr, dividemix}.

In this work, we consider an under-explored problem setting aiming at reducing the annotation effort -- learning fine-grained representations with a coarsely-labelled dataset. Specifically, we learn with a dataset that is fully labeled, albeit at a coarser granularity than we are interested in (i.e., that of the test set). Compared to fine-grained labels, coarse labels are often significantly easier to obtain, especially in some of the more specialized domains, such as the recognition and classification of medical pathology images. As a simple example, for the task of differentiation between different pets, we need a knowledgeable cat lover to distinguish between `British short' and `Siamese', but even a child annotator may help to discriminate between `cat' and `non-cat'~(\cref{fig:intro_mechanism_toy}). 
Unfortunately, learning with a coarse labelled dataset has been less investigated compared to other weakly supervised learning paradigms. 
Recently, Bukchi et al.~\cite{coarse_fewshot} investigate on learning with coarse labels in the few-shot setting. More closely related to us, Grafit~\cite{touvron2021grafit} proposes a multi-task framework by a weighted combination of self-supervised contrastive learning and supervised contrastive learning cost; Similarly, CoIns~\cite{xu2021coins} uses both a self-supervised contrastive learning cost and a supervised learning cross-entropy loss. Both works combine a fully supervised learning cost (cross entropy or contrastive) with a self-supervised contrastive loss -- these works are the main ones with which we compare. 

Differently than them, instead of using self-supervised contrastive learning as an auxiliary task, we propose a novel learning scheme, namely \textbf{Mask}ed \textbf{Con}trastive Learning~(\textbf{MaskCon}).  Our method aims to learn by considering inter-sample relations of each sample with other samples in the dataset. Specifically, we always consider the relation to oneself as confidently positive. To estimate the relations to other samples, we derive soft labels by contrasting an augmented view of the sample in question with other samples, and further improve it by utilizing the \textit{mask} generated based on the coarse labels.
Our approach generates soft inter-sample relations that can more accurately estimate fine inter-sample relations compared to the baseline methods~(\cref{fig:intro_mechanism_toy}). Efficiently and effectively, our method achieves significant improvements over the state-of-the-art in various datasets, including CIFARtoy, CIFAR100, ImageNet-1K and more challenging fine-grained datasets Stanford Online Products and Stanford Cars196. 

\section{Related works}
In this section, we first briefly review the contrastive learning works which are the basis of our method and then, we briefly introduce the state-of-the-art works in related problem settings. 


\paragraph{Self-supervised contrastive learning}
\label{sec:contrastive_learning_rw}
Contrastive learning has recently emerged as a powerful method for representation learning without labels. In the context of self-supervision, contrastive learning has been used for instance discrimination, producing models whose performance rivals that of fully supervised training. The objective of contrastive instance discrimination is to minimize the distance between transformed views of the same sample in the feature space, while maximizing their distance from views of other samples. Notable methods in this area are SimCLR~\cite{simclr} and MoCo~\cite{moco}, as well as other variants including SwAV~\cite{swav}, which leverages clustering to identify prototypes for the contrastive learning objective and~\cite{wang2021solving}, that reformulates the instance discrimination contrastive objective in the form of a triplet loss. These methods have been found to produce models that, requiring only limited amounts of annotated data for fine-tuning on a particular task or domain, achieve performance that reaches or even surpasses that of supervised models~\cite{simclrv2, ericsson2021well}. Furthermore, self-supervised training has been found to compare positively with supervised training in other respects, such as producing models that perform better in the context of continual learning~\cite{gallardo2021self}.

However, self-supervised contrastive learning based on instance discrimination task usually suffers from the `false negative' problem -- that samples from the same class should not be considered as negative. For example, pictures of different cats should not be considered completely negative to each other. To this end, heuristically modifying the inter-sample relations have been widely applied in self-supervised learning~\cite{dwibedi2021nnclr, meanshift, CLID, ascl}, intuitively similar to us. Many of the state-of-the-art methods can be considered as special cases of our method by the adjustment of hyperparameters~(\cref{method:relation_to_otherworks}).

\paragraph{Combing contrastive learning with supervised learning}
\label{sec:combination_rw}
The success of self-supervised contrastive learning has led to increasing attention on how to integrate contrastive learning into existing supervised learning paradigms. One approach is supervised contrastive learning, which adapts contrastive learning to the fully supervised setting. Supervised contrastive learning was first introduced in \cite{nca, cns} under the name of neighbourhood components analysis (NCA). Recently, supervised contrastive learning has regained interest due to the significant progress made in \cite{supcon} by employing more advanced network structures and image augmentations. Compared to standard supervised learning using cross-entropy loss, supervised contrastive learning exhibits superior performance in hard sample mining~\cite{supcon}.

Other works have attempted to introduce self-supervised contrastive learning as an auxiliary task, especially for weakly supervised learning problems. Gidaris et al.~\cite{gidaris2019boosting} jointly train a supervised classifier and a rotation-prediction objective. Komodakis and Gidaris~\cite{komodakis2018unsupervised} seeking to improve model performance on few-shot learning. Wei et al.~\cite{wei2020can} propose a label-filtered instance discrimination objective for the pre-training of models to be used for transfer learning in other datasets. S4~\cite{zhai2019s4l} uses rotation and exemplar~\cite{dosovitskiy2014discriminative} prediction to improve performance in semi-supervised learning tasks. Feng et al.~\cite{ssr} apply a self-supervised contrastive loss to safely learn meaningful representations in the presence of noisy labels.

\paragraph{Hierarchical image classification \& Unsupervised clustering}
Hierarchical image classification methods aim to learn better representations with datasets having hierarchically structured labels. To utilize information from different levels of labels,  Zhang et al.~\cite{zhang2022hierarchical_alllabels} propose a multi-losses contrastive learning framework with each loss considering a specific level of labels.
Unsupervised clustering methods often have the same problem setting but different goals than self-supervised learning, that is, to recover and identify the ground-truth semantic labels of each sample~\cite{caron2018deepclustering, van2020scan}. DeepCluster~\cite{caron2018deepclustering} simultaneously learns the clustering assignments and a parametric classifier, by feeding the clustering assignments as pseudo labels to the classifier. SCAN~\cite{van2020scan} inherits the idea of DeepClustering while augmenting extra self-labelling and neighbor mining techniques.

\section{Methodology}\label{method}
We consider the problem of learning when the labels of the training data and the labels of the test data are inconsistent in granularity, and in particular when the labels of the training set are coarser. More specifically, let us denote with $\mX = \{\vx_i\}_{i=1}^N$ an i.i.d sampled train dataset with annotated \textit{coarse labels} $\mY = \{\vy_i \in \{0, 1\}^M\}_{i=1}^N$ subject to $\sum_{m=1}^M \vy_{im} = 1$. Here, $N$ denotes the number of samples and $M$ denotes the number of coarse classes. Let us also define a finely labelled set $\mY' = \{\vy_i' \in \{0, 1\}^{M'}\}_{i=1}^N$ subject to $\sum_{k=1}^{M'} \vy_i^k = 1$, where $M'$ denotes the number of fine classes. Typically, $M' > M$. 

\subsection{Preliminaries}\label{method_pre}
We first quickly review the common learning paradigms utilized in the baseline methods and our method.
\subsubsection{Supervised learning}\label{method_pre_supervised}
In most supervised learning frameworks, the objective is the minimization of the below empirical risk:
\begin{equation}\label{eq:cerisk}
    R(f, g) = \sum_{i=1}^N L_{ce}(\vx_i, \vy_i; f, g),
\end{equation}
where $f$ denotes the feature encoder, and $g$ denotes the classifier head~(usually a single fully-connected layer).

For brevity, we denote $\vf_i \triangleq f(\vx_i)$, $\vg_i \triangleq g(f(\vx_i)$ and $\vp_i \triangleq \softmax(\vg_i)$ as the \textit{feature}, \textit{logit} and \textit{prediction} of a sample view $\vx_i$ respectively. Here,
\begin{equation}\label{eq:celoss}
    L_{ce}(\vx_i, \vy_i; f, g) = -\sum_{m=1}^M \vy_i^m\log \vp_i^m
\end{equation}
is the cross-entropy loss function -- the default loss function for classification problems.

\subsubsection{Contrastive learning}
Unlike the common supervised learning model above where a parametric classifier $g$ is learned based on the semantic labels $\mY$ of the samples $\mX$, we can also perform contrastive learning based on the inter-sample relations $\mZ = \{\vz_i \in (0, 1)^N\}_{i=1}^N$, with each entry $z_{ij}$ depicting the inter-sample relation between $\vx_i$ and $\vx_j$. 
Intuitively, $z_{ij} = 1$ means that sample $\vx_i$ and $\vx_j$ generate a strong positive pair. Since each sample may form multiple positive sample pairs, for brevity, we abuse the notation here with $\mZ$ denoting also the sample-wise normalized inter-sample relations.
To learn such inter-sample relations, instead of a parametric classifier $g$, the $f$ encoder is usually followed by a projector $h$, which is often implemented as an MLP and learned by regularizing the inter-sample relations $\mZ$~(\cref{eq:selfconrisk}). 

More specifically, let us denote by $\vh_i \triangleq h(f(\vx_i)$ the \textit{projection}. We first calculate the cosine similarity $\vd_i$ between a sample $\vx_i$ and the dataset $\mH = \{\vh_n\}_{n=1}^N$:
\begin{equation}\label{eq:cosinedist}
    \vd_i = [\cos(\vh_i, \vh_1), \cos(\vh_i, \vh_2),..., \cos(\vh_i, \vh_N)],
\end{equation}
Let us further define $\vq_i \triangleq \softmax(\vd_i/\tau_0)$, where $\tau_0$ is the temperature hyperparameter. Then the following empirical risk will be optimized:
\begin{equation}\label{eq:selfconrisk}
    R(f, h) = \sum_{i=1}^N L_{con}(\vx_i, \vz_i; f, h),
\end{equation}
where the contrastive loss $L_{con}$ is defined as follows:
\begin{equation}\label{eq:conloss}
    L_{con}(\vx_i, \vz_i; f, h) =  -\sum_{n=1}^N \vz_i^n\log \vq_i^n.
\end{equation}


\noindent\textbf{Self-supervised contrastive learning}
\label{method_pre_selfcontratsive}
We first introduce the most prevalent form of contrastive learning currently --- self-supervised contrastive learning for learning without labels. Since there are no annotated labels, we usually set $\mZ^{self}$ by considering each sample having only itself as positive:
\begin{equation}
    z^{self}_{ij}=\left\{
                \begin{array}{ll}
                  1, &\text{if}\ i = j\\
                   0, &\text{if}\ i \neq j
                \end{array}
              \right.
\end{equation}
Typically, we aim at maximizing the relations between different augmented views of the same sample while minimizing the relations between augmented views of different samples. This is also widely known as instance discrimination task. We denote such self-supervised contrastive loss as $L_{selfcon}$.

\noindent\textbf{Supervised contrastive learning}
\label{method_pre_supervisedcontrastivelearning}
We can easily transition from self-supervised contrastive learning to supervised contrastive learning, by simply changing the inter-sample relations $\mZ^{sup}$ for \cref{eq:selfconrisk} and \cref{eq:conloss}. More specifically, as in \cite{nca, cns, supcon}, samples from the same semantic classes are considered positive pairs, 
\begin{equation}
    z^{sup}_{ij}=\left\{
                \begin{array}{ll}
                  1, &\text{if}\ \vy_i = \vy_j\\
                   0, &\text{if}\ \vy_i \neq \vy_j
                \end{array}
              \right.
\end{equation}
in addition to the sample itself (and its augmented views) as in self-supervised contrastive learning. We denote the supervised contrastive loss as $L_{supcon}$.

\begin{mdframed}[linecolor=lightgray!20,backgroundcolor=lightgray!20,everyline=true,linewidth=2pt,roundcorner=0pt,innertopmargin=0pt,
    innerbottommargin=0pt,
    innerrightmargin=0pt,
    innerleftmargin=0pt,
        leftmargin = 0pt,
        rightmargin = 0pt
        ]
\textit{\textbf{Memory bank} For consistency with the supervised cross-entropy loss and for clarity, we present all the essential formulas above in terms of the entire dataset $\mX$. However, due to computation and memory constraints, it is often unrealistic to contrast a specific sample $\vx_i$ with the whole dataset.  In the actual implementation in this work, following MoCo\cite{moco}, we use a dynamic \textit{FIFO} memory bank $\mH = \{\vh_p\}_{p=1}^P$ consisting of cached projections of $P$ samples. For each sample $\vx_i$, we generate two random augmented views as $\vx_q$~(query) and $\vx_k$~(key) with their corresponding projections denoted by $\vh_q$ and $\vh_k$, respectively. Then, we calculate the cosine similarity between $\vh_q$ on the one hand, and $\vh_k$ and each projection in the memory bank on the other: 
\begin{equation}\label{eq:mocosim}
    \vd_i = [\overbrace{\cos(\vh_q, \vh_k)}^{self}, \overbrace{\cos(\vh_q, \vh_1),..., \cos(\vh_q, \vh_P)}^{memory \ bank}],
\end{equation}
For more details, please refer to the MoCo paper~\cite{moco}.}
\end{mdframed}

\subsubsection{Baseline methods}
Clearly, with respect to the finer target question labels, each sample $\vx_i$, is a trustworthy positive sample in relation to itself (i.e., they have the same fine-grained label). However, considering the samples with the same coarse label as positives may lead to under-clustering issues~(\cref{fig:tsne}) (they may have different fine-grained labels). 
Recent works attempt to address such under-clustering by combining an additional self-supervised contrastive loss with a specific supervised loss that utilizes coarse labels. Grafit~\cite{touvron2021grafit} combine it with the supervised contrastive loss and CoIns~\cite{xu2021coins} with the the supervised cross-entropy loss. 
In both cases, as a result, the self-supervised contrastive loss that considers each sample as a class by itself, aims to mitigate the tendency for under-clustering. 
Formally, given our definitions of the previously mentioned loss functions in \cref{method_pre} ($L_{ce}$, $L_{selfcon}$ and $L_{supcon}$) it turns out that the losses used by Grafit~\cite{touvron2021grafit} and CoIns~\cite{xu2021coins} can be expressed as follows: 
\begin{align}
    L_{Grafit} &= wL_{supcon} + (1-w)L_{selfcon} \\
    L_{CoIns} &= wL_{ce} + (1-w)L_{selfcon}
\end{align}
where $w$ controls the relative weight of each loss.

\subsection{MaskCon: Masked Contrastive learning}\label{method_maskcon}
Instead of equally weighing all the samples within the same coarse classes, we aim to emphasize the samples with the same fine labels and reduce the importance of the others. To achieve this, we introduce a novel contrastive learning method, namely \textbf{Mask}ed \textbf{Con}trastive learning~(\textbf{MaskCon}), within the framework of contrastive learning that utilizes inter-sample relations directly.

More specifically, for sample $\vx_i$, we estimate its inter-sample relations $\vz'_i$ to other samples utilizing the key view projection $\vh_k$ and the whole dataset $\{\vh_1, ..., \vh_N\}$ excluding itself~(since it will always be considered as a trustworthy positive), as below:
\begin{equation}\label{eq:mask_inter}
    z'_{ij} =  \frac{\mathbbm{1}(\vy_j = \vy_i)\cdot \exp(d'_{ij}/\tau)}{\sum_{n=1, n\neq i}^N\mathbbm{1}(\vy_n = \vy_i)\cdot \exp(d'_{in}/\tau)}, i\neq j,
\end{equation}
where the similarity $\vd'_i$ is given by
\begin{align}\label{eq:pseudosim}
\begin{split}
\vd'_i = &[\cos(\vh^k_i, \vh_1),...,\cos(\vh^k_i, \vh_{i-1}),\\&\cos(\vh^k_i, \vh_{i+1}),..., \cos(\vh^k_i, \vh_N)].
\end{split}
\end{align}
Please note the use of the mask~($\mathbbm{1}(\vy_j = \vy_i)$) that excludes from the $\softmax$ that estimates inter-sample relationships, the samples $j$ that have a different coarse label with the sample $i$ (and sets their $z'_{ij}$ to 0). While it is risky to consider all samples from the same coarse class as positive, we can confidently identify those samples that do not have the same coarse class as negative. This reduces the noise in $\vz'_i$.
Finally, we re-scale the $\vz'_i$ with its maximum
\begin{equation}
    z'_{ij} = z'_{ij} / \max(\vz'_i),
\end{equation}
to make the closest neighbour as positive as the sample itself and arrive at:
\begin{equation}
    z^{mask}_{ij}=\left\{
                \begin{array}{ll}
                  1, &\text{if}\ i=j\\
                  z'_{ij}, &\text{if}\ i\neq j
                \end{array}
              \right.
\end{equation}
Compared to $\mZ^{supcon}$, we thus reweight the samples of the same coarse label according to the similarities in the feature space.

We denote the masked contrastive loss as $L_{maskcon}$ and, similarly to Grafit and CoIns, we also consider a weighted loss as the final objective:
\begin{align}
    L &= wL_{maskcon} + (1-w)L_{selfcon}
\end{align}

\paragraph{Relations to SOTA works}\label{method:relation_to_otherworks}
By adjusting $w$ and $\tau$, our method can obtain various existing SOTA methods as special cases. More specifically, by setting $\tau$ to $\infty$, our method degenerates to Grafit~\cite{touvron2021grafit}. Ignoring the mask, i.e., treating all samples as having the same coarse label,
our method can also generalize to existing SOTAs in self-supervised learning. For example, NNCLR~\cite{dwibedi2021nnclr} instead of considering only each sample itself as positive it considers as positive the nearest neighbors as well. Formally, by setting $w = 1$ and $\tau = 0$, MaskCon degenerates to NNCLR; ASCL~\cite{ascl} proposes to introduce extra positive samples by calculating soft inter-sample relations, adaptively weighted by its own normalized entropy. By setting $w=\mathsf{norm\_entropy}(\vz'_i)$, MaskCon degenerates to ASCL. 

\subsection{Theoretical justification of MaskCon}
The fundamental goal of contrastive learning is to identify $\mZ$ that suits the problem granularity and accurately represents the relationship between samples. Assuming that there exists an optimal hidden $\hat{\mZ}$, we can define the expected population risk in terms of $\hat{\mZ}$ as shown in~\cref{eq:populationrisk}. 
\begin{equation}\label{eq:populationrisk}
    R_E(f, h) = E_{\vx_i} [L_{con}(\vx_i, \hat{\vz_i}; f, h)]
\end{equation}
Moreover, let us define the empirical risk with respect to $\hat{\mZ}$ as shown in~\cref{eq:optimalrisk}.
\begin{equation}\label{eq:optimalrisk}
      \hat{R}(f, h) = \sum_{i=1}^N L_{con}(\vx_i, \hat{\vz_i}; f, h)
\end{equation}

\begin{theorem}\label{theorem1}
The generalization error bounds of contrastive learning depends on the difference between the inter-sample relations $\mZ$ and the optimal hidden inter-sample relations $\hat{\mZ}$, i.e.,
\begin{equation}
\label{eq:zdiff}
    d_z = \frac{1}{n}\sum_{i=1}^n \lVert \vz_i - \hat{\vz_i} \rVert_2
\end{equation}
\end{theorem}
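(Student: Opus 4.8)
The statement amounts to claiming that $d_z$ is the only inter-sample-relation-dependent quantity entering a uniform-convergence guarantee for the contrastive objective, so the plan is to \emph{exhibit} a bound of the form
\begin{equation}\label{eq:proofplan}
R_E(\hat{f},\hat{h}) \;\le\; \hat{R}(\hat{f},\hat{h}) \;+\; c\, d_z \;+\; \big(\text{complexity}+\text{confidence}\big),
\end{equation}
where $(\hat{f},\hat{h})$ minimises the empirical objective $R(f,h)=\frac1N\sum_i L_{con}(\vx_i,\vz_i;f,h)$, all risks are read per-sample for consistency with the expectation in $R_E$, and $c$ depends only on the temperature and the feature normalisation. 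I would use the idealised loss $L_{con}(\cdot,\hat{\vz_i};\cdot)$ as a pivot and split the analysis into two essentially independent pieces: (i) a deterministic, pointwise comparison of the loss at $\vz_i$ versus at $\hat{\vz_i}$, which produces the $d_z$ term; and (ii) a standard stochastic uniform-convergence argument for the \emph{fixed} objective built from $\hat{\mZ}$.

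For piece (i), \cref{eq:conloss} gives, for every $(f,h)$ and every $i$,
\begin{equation}
L_{con}(\vx_i,\vz_i;f,h)-L_{con}(\vx_i,\hat{\vz_i};f,h) \;=\; -(\vz_i-\hat{\vz_i})^{\!\top}\log\vq_i,
\end{equation}
with $\log\vq_i$ understood entrywise, whence Cauchy--Schwarz yields $\lvert L_{con}(\vx_i,\vz_i;f,h)-L_{con}(\vx_i,\hat{\vz_i};f,h)\rvert \le \lVert\vz_i-\hat{\vz_i}\rVert_2\,\lVert\log\vq_i\rVert_2$. Since $\vq_i=\softmax(\vd_i/\tau_0)$ is formed from cosine similarities $\vd_i\in[-1,1]^{N}$ with a fixed $\tau_0>0$, every coordinate obeys $\vq_i^n\ge e^{-2/\tau_0}/N$, so $\lVert\log\vq_i\rVert_2\le B$ for a constant $B=B(\tau_0,N)$ (replace $N$ by $P$ in the memory-bank implementation). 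Averaging over $i$ gives $\lvert R(f,h)-\hat{R}(f,h)\rvert\le B\,d_z$ uniformly over the hypothesis class; in particular $\hat{R}(\hat{f},\hat{h})\le R(\hat{f},\hat{h})+B\,d_z$.

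For piece (ii), the family of loss functions $L_{con}(\cdot,\hat{\vz_i};f,h)$ indexed by the encoder--projector pair $(f,h)\in\mathcal F$ is uniformly bounded (again by the temperature/normalisation argument) and Lipschitz in the projections, so a standard Rademacher-complexity bound yields, with probability at least $1-\delta$ over the i.i.d.\ draw of $\mX$, $\sup_{(f,h)}\big(R_E(f,h)-\hat{R}(f,h)\big)\le 2\,\mathfrak{R}_N(\mathcal F)+O\!\big(\sqrt{\log(1/\delta)/N}\big)$. Combining this with piece (i) at the minimiser $(\hat{f},\hat{h})$ gives \cref{eq:proofplan}; since every term on the right except $c\,d_z$ is independent of the choice of $\mZ$, a scheme whose relations lie closer to $\hat{\mZ}$ (smaller $d_z$) enjoys a tighter guarantee --- exactly the advantage claimed for MaskCon over the $\mZ^{self}$ and $\mZ^{sup}$ instantiations.

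The hard part is not the calculation but making the informal claim precise, and two points need care. First, the boundedness of $\lVert\log\vq_i\rVert_2$ (hence of $L_{con}$, used in both pieces) rests on $\ell_2$-normalised features and a temperature bounded away from zero; in the memory-bank variant one must further observe that the bank entries are themselves normalised projections, so the same estimate applies. Second, and more delicate, in MaskCon $\mZ$ is itself \emph{estimated from} $\mX$ through \cref{eq:mask_inter} and so is not independent of the sample --- which is precisely why pieces (i) and (ii) are kept apart: piece (i) is a pointwise identity plus Cauchy--Schwarz, valid for \emph{any} $\mZ$ whatsoever, learned or not, while piece (ii) only ever involves the fixed reference $\hat{\mZ}$, so no data-dependence issue arises. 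A fully quantitative statement would still require controlling $\mathfrak{R}_N(\mathcal F)$ for the specific architecture, but that is orthogonal to the role of $d_z$, which is all the theorem asserts.
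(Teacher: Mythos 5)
Your proposal follows essentially the same route as the paper: the identical triangle-inequality decomposition through the idealised empirical risk $\hat{R}(f,h)$ built from $\hat{\mZ}$, with one term controlled by the dependence of the loss on $\vz_i$ (yielding $\lambda\, d_z$) and the other by a uniform concentration bound over the hypothesis class. The only differences are refinements rather than a different argument --- the paper simply \emph{assumes} $L_{con}$ is bounded and $\lambda$-Lipschitz in $\vz_i$ and applies Hoeffding's inequality with a union bound over the covering numbers $|\mathcal{F}|\cdot|\mathcal{H}|$, whereas you verify the Lipschitz constant explicitly from the linearity of $L_{con}$ in $\vz_i$ together with a lower bound on the softmax outputs, and substitute a Rademacher-complexity bound for the finite-class concentration step.
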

\begin{proof}
Let us assume $L_{con}(\vx_i, \vz_i; f, h) \in [a, b]$ and is $\lambda$-Lipschitz continuous w.r.t $\vz_i$. Let $|\mathcal{F}|$ and $|\mathcal{H}|$ be the covering numbers that correspond to the finite hypothesis space of $f$ and $g$.
We have:
\begin{align*}
    |R_E(f, h) - R(f, h)| &\leq  \overbrace{|R_E(f, h) - \hat{R}(f, h)|}^{\mathsf{Hoeffding’s \ inequality}} \\
    &+  \overbrace{|\hat{R}(f, h) - R(f, h)|}^{\mathsf{Lipschitz \ continuous}} \\
    &\leq |a-b|\sqrt{\frac{log(2|\mathcal{F}|\cdot|\mathcal{H}|/\delta)}{2n}} \\
    &+ \lambda \frac{1}{n}\sum_{i=1}^n \lVert \vz_i - \hat{\vz_i} \rVert_2
\end{align*}
with probability at least $1-\delta$.
\end{proof}

Based on \cref{theorem1}, it can be concluded that our method performs better than Grafit with high probability, because an appropriate temperature $\tau$ allows $\mZ^{mask}$ to more accurately estimate the optimal $\hat{\mZ}$~(\cref{fig:intro_mechanism_toy}) in comparison to assigning equal weights to all coarse neighbours.

\section{Experiments}
In this section, extensive experiments are conducted to validate the findings and effectiveness of the proposed Masked Contrastive Learning (MaskCon) framework. The impact of hyperparameters is thoroughly investigated in \cref{exp:ablations}.
In~\cref{exp:cifartoy}, \cref{exp:cifar100} and \cref{exp:in1k}, results are provided for experiments on CIFAR datasets and ImageNet-1K dataset with coarse labels. In~\cref{exp:fine-grained}, we apply our method on more challenging fine-grained datasets, including Stanford Online Products~(SOP) dataset~\cite{sopdataset} and Stanford Cars 196 dataset~\cite{cars196}. 

To ensure the robustness of the proposed method and exclude the impact of model capacity, the model settings are kept consistent throughout all the experiments, except for the two hyperparameters, $w$ and $\tau$, which are ablated in \cref{exp:ablations}.
ResNet18~\cite{resnet} is employed in all experiments, with the initial convolutional layer modified to have a kernel size of 3$\times$3 and a stride of 1, and the initial max-pooling operator removed for CIFAR experiments, to account for the smaller image size~(32$\times$32) as suggested in prior works \cite{moco}. More implementation and dataset details can be found in \cref{appendixa}.


We compare our method with two competing methods: \textbf{Grafit} and \textbf{CoIns}. For a fair comparison, we exhaust the weight $w$ choices for both methods and report the best achievable results in all experiments. Note that when $w=0$, Grafit and CoIns degenerate to self-supervised contrastive learning denoted as \textbf{SelfCon}; Conversely, when $w=1$, Grafit degenerates to supervised contrastive learning~\cite{supcon} denoted as \textbf{SupCon}, while CoIns degenerates to conventional supervised cross-entropy learning denoted as \textbf{SupCE}. 
For reference, we also show the results when training with fine labels -- this is denoted as \textbf{SupFINE}. 
\paragraph{Evaluation protocol}
To evaluate the different methods on the test set with fine labels, we use the recall@K~\cite{oh2016recallretrieval} metric widely used in the image retrieval task. Each test image first retrieves top-K nearest neighbours from the test set and receives 1 if there exists at least one image from the same fine class among the top-K nearest neighbours, otherwise 0. Recall@K averages this score over all the test images.
\begin{figure}[htbp]
    \centering
    \includegraphics[width=0.9\linewidth]{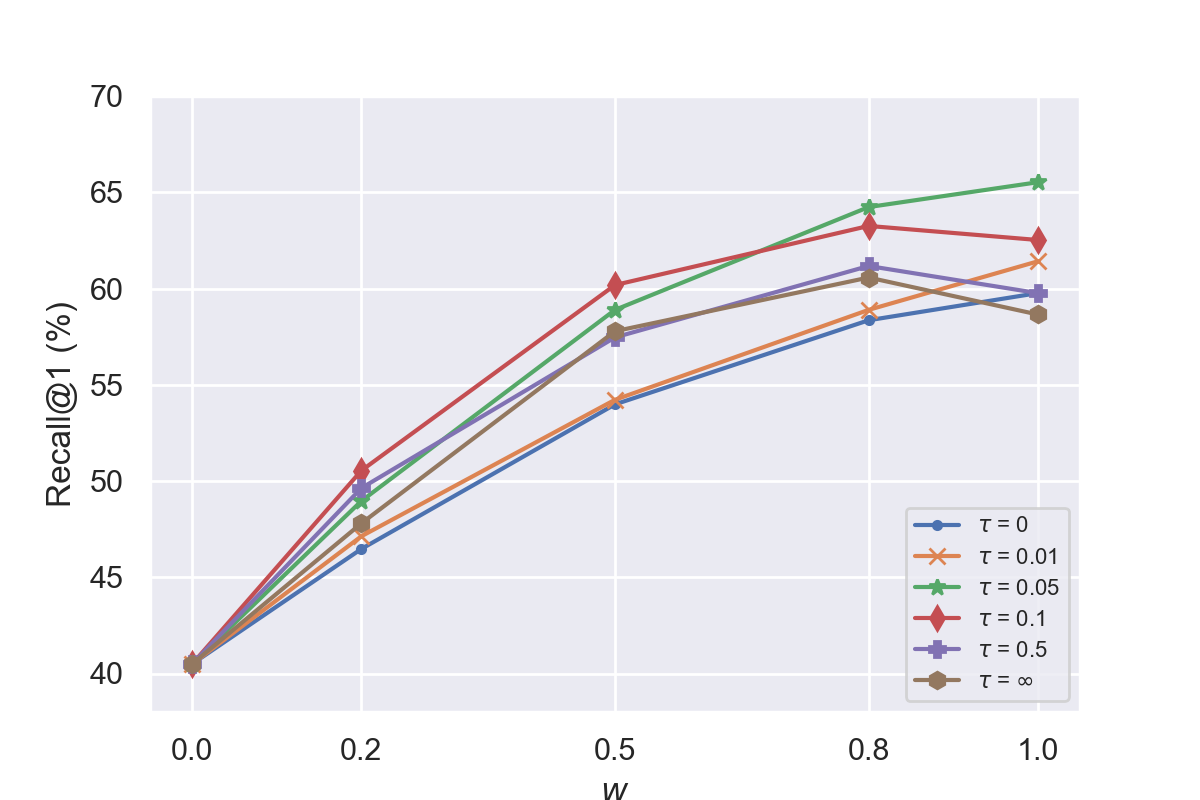}
    \caption{Recall@1 w.r.t $w$ and $\tau$ on CIFAR100 dataset}
    \label{fig:hyperparameters}
\end{figure}

\begin{table*}[htbp]
\centering
\resizebox{0.82\linewidth}{!}{
\begin{tabular}{@{}lcccccccc@{}}
\toprule
\multirow{2}{*}{Method} & \multicolumn{4}{c}{CIFARtoy-goodsplit}                           & \multicolumn{4}{c}{CIFARtoy-badsplit}                             \\ \cmidrule(l){2-5} \cmidrule(l){6-9}
                        & Recall@1       & Recall@2       & Recall@5       & Recall@10      & Recall@1       & Recall@2       & Recall@5       & Recall@10      \\ \midrule
SelfCon                 & 84.83          & 91.55          & 96.35          & 98.16          & 84.83          & 91.55          & 96.35          & 98.16          \\
Grafit                  & 86.61          & 92.33          & 97.01          & 98.38          & 89.96          & 94.36          & 97.61          & 98.10   \\
SupCon                  & 73.84          & 84.25          & 92.14          & 95.46          & 84.66          & 90.93          & 95.15          & 96.71          \\
CoIns                   & 86.15          & 92.76          & 97.21          & 98.46          & 90.55          & 94.94          & 97.73          & 98.71          \\
SupCE                   & 76.30          & 85.26          & 94.65          & 97.46          & 87.15          & 92.85          & 96.78          & 98.34          \\ \midrule
\textcolor{gray}{SupFINE}        & \textcolor{gray}{94.11}          & \textcolor{gray}{96.53}          & \textcolor{gray}{98.25}          & \textcolor{gray}{98.96}   & \textcolor{gray}{94.11}          & \textcolor{gray}{96.53}          & \textcolor{gray}{98.25}          & \textcolor{gray}{98.96}   \\ \midrule
MaskCon~(Ours)          & \textbf{90.28}~\footnotesize{(\textcolor{green}{13.98$\uparrow$})} & \textbf{94.04} & \textbf{97.33} & \textbf{98.53} & \textbf{91.56}~\footnotesize{(\textcolor{green}{4.41$\uparrow$})} & \textbf{95.23} & \textbf{97.70} & \textbf{98.70}          \\\bottomrule
\end{tabular}
}
\caption{Results on CIFARtoy dataset.}
\label{tab:cifartoy}
\end{table*}

\begin{figure*}[htbp]
    \centering
    \includegraphics[width=0.8\textwidth]{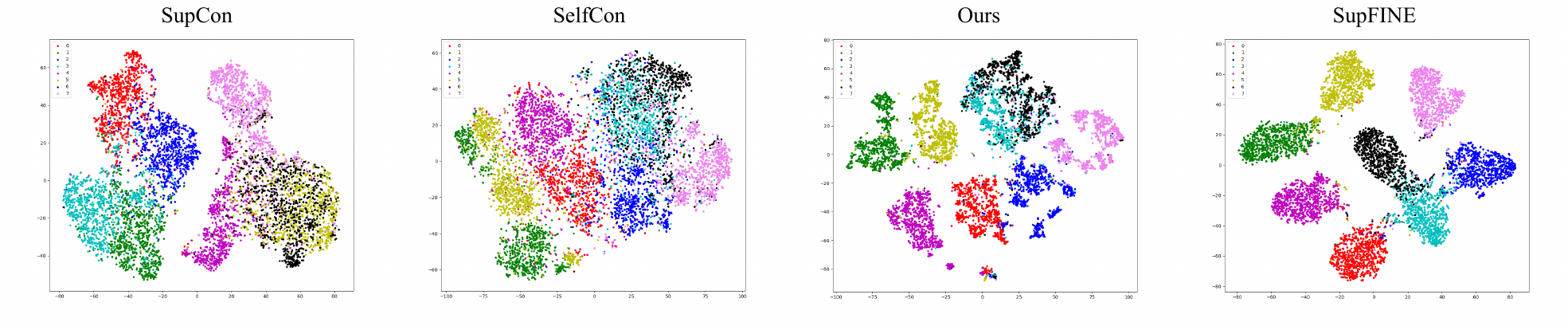}
    \caption{t-SNE visualization of learned representation on CIFARtoy dataset.}
    \label{fig:tsne}
\end{figure*}

\subsection{Effect of $w$ and $\tau$}\label{exp:ablations}
In this section, we extensively ablate the effect of hyperparameters of MaskCon -- weight $w$ and temperature $\tau$. Specifically, we show the recall@1 results on CIFAR100 dataset in \cref{fig:hyperparameters}, with $w = \{0, 0.2, 0.5, 0.8, 1.0\}$ and $\tau = \{0, 0.01, 0.05, 0.1, 0.5, \infty\}$.

The results demonstrate a significant improvement in performance upon the inclusion of additional inter-sample relationships ($w\neq 0$) in comparison to self-supervised learning alone ($w=0$) on the CIFAR dataset. Furthermore, a suitable temperature $\tau$, such as 0.05 or 0.1, consistently yields superior results compared to the simple weighted combination in Grafit ($\tau = \infty$). In general, we recommend initiating the hyperparameter search with $w = 1$ and $\tau = \tau_0$, where $\tau_0$ denotes the temperature for the projection head. Additional results concerning various hyperparameters on other datasets can be found in \cref{appdix b}.

\subsection{Experiments on CIFARtoy dataset} \label{exp:cifartoy}
To simulate the coarse labelling process, we manually generate toy datasets based on the CIFAR10 dataset. Following the problem motivation, a subset of 8 classes is selected from the original 10 classes. Specifically, the classes 'airplane', 'truck', 'automobile', and 'ship' are designated as Class A, while 'horse', 'dog', 'bird', and 'cat' are assigned to Class B. Note that Class A comprises non-organic objects, while Class B comprises animals. Moreover, a bad split of the aforementioned dataset is defined, wherein 'airplane', 'automobile', 'bird', and 'cat' are Class A, and 'horse', 'dog', 'ship', and 'truck' are Class B.

As shown in \cref{tab:cifartoy}, our method achieves the best performance in both splits. Moreover, the improvement over the supervised learning with coarse labels~(SupCE) on good split~(13.98\%) is much higher than the bad split~(4.41\%), which shows the substantial potential of MaskCon in dealing with more realistic coarse-labelled datasets~\footnote{
It is worth noting that the better performance on the bad split may seem counter-intuitive at first glance.
However, note that a bad split actually makes the labels more informative, as it explicitly helps to discriminate visually similar classes, e.g., 'cat' and 'dog' in different coarse classes in our setting.}.

In \cref{fig:tsne} we visualize the learned \textit{features} of all test samples in the good split. Clearly, the learned representations with supervised contrastive learning~(SupCon) and self-supervised contrastive learning~(SelfCon) tend to under-cluster and over-cluster the samples, respectively. By contrast, our method gets more compact and clear clusters, in line with the results when trained with fine labels~(SupFINE).

\subsection{Experiments on CIFAR100 dataset}\label{exp:cifar100}
The common CIFAR100 dataset has 20 classes of coarse labels in addition to the 100 classes of fine labels, with each coarse class containing five fine-grained classes~(500 samples). The results in \cref{tab:cifar100} show that our method achieves significant improvements over the SOTAs. In particular, it improves the top-1 retrieval precision from 47.25\% to 65.52\%, approaching the results by the model learned with fine labels~(71.13\%). 
\begin{table}[htbp]
\centering
\resizebox{0.95\linewidth}{!}{
\begin{tabular}{@{}lcccc@{}}
\toprule
Method         & Recall@1       & Recall@2       & Recall@5       & Recall@10      \\ \midrule
SelfCon        & 40.50          & 51.83          & 66.23          & 76.66          \\
Grafit         & 60.57          & 71.13          & 82.32          & 89.21          \\
SupCon         & 58.65          & 70.04          & 82.18          & 89.09          \\
CoIns          & 60.10          & 70.89          & 83.14          & \textbf{89.52} \\
SupCE          & 47.25          & 61.24          & 77.78          & 87.01          \\ \midrule
\textcolor{gray}{SupFINE}        & \textcolor{gray}{71.13}          & \textcolor{gray}{80.03}  & \textcolor{gray}{87.61}          & \textcolor{gray}{91.59}          \\ \midrule
MaskCon~(Ours) & \textbf{65.52}~\footnotesize{(\textcolor{green}{18.17$\uparrow$})} & \textbf{74.46} & \textbf{83.64} & 89.25          \\ \bottomrule
\end{tabular}
}
\caption{Results on CIFAR100 dataset.}
\label{tab:cifar100}
\end{table}


\subsection{Experiments on ImageNet-1K dataset}\label{exp:in1k}
In this section, we evaluate our method on the large-scale ImageNet-1K dataset. For efficiency, we experiment with the downsampled version of ImageNet-1K dataset~\cite{chrabaszcz2017downsampled}, where each sample was resized to 32$\times$32. 
Since no official coarse labels exist for ImageNet, we introduce coarse labels based on the WordNet~\cite{miller1998wordnet} hierarchy so as to artificially group the whole dataset into 12 coarse classes: `0:~Invertebrate', `1:~Domestic animal', `2:~Bird', `3:~Mammal', `4:~Reptile/Aquatic vertebrate', `5:~Device', `6:~Vehicle', `7:~Container', `8:~Instrument', `9:~Artifact', `10:~Clothing' and `11:~Others'. In \cref{tab:imagenet}, we show the results on downsampled ImageNet-1K datasets. 

\begin{table}[htbp]
\centering
\resizebox{0.95\linewidth}{!}{
\begin{tabular}{@{}lcccc@{}}
\toprule
Method                    & Recall@1                & Recall@2                & Recall@5                & Recall@10               \\ \midrule
SelfCon                   & 10.28                   & 14.15                   & 22.36                   & 30.34                   \\
Grafit                    & 18.13                   & 25.46                   & 37.19                   & 46.64                   \\
SupCon                    & 13.36                   & 19.40                   & 29.77                   & 39.38                   \\
CoIns                     & 18.36                   & 25.54                   & 37.09                   & 46.89                   \\ 
SupCE                     & 12.23                   & 14.15                   & 27.76                   & 37.03                   \\
\midrule
\textcolor{gray}{SupFINE} & \textcolor{gray}{33.97} & \textcolor{gray}{44.55} & \textcolor{gray}{57.23} & \textcolor{gray}{65.77} \\ \midrule
MaskCon~(Ours)                   & \textbf{19.08}~\footnotesize{(\textcolor{green}{6.86$\uparrow$})}          & \textbf{26.21}          & \textbf{38.17}          & \textbf{47.96}          \\ \bottomrule
\end{tabular}
}
\caption{Results on ImageNet-1K dataset.}
\label{tab:imagenet}
\end{table}

We can again validate that with temperature-controlled soft relations, our method surpasses SupCE, Grafit and CoIns consistently, especially for top-10 retrieval results.

\subsection{Experiments on fine-grained datasets}\label{exp:fine-grained}
In this section, we conduct experiments in a more challenging scenario -- fine-grained datasets with only coarse labels. Please note, that in this work, we do not aim to compare with the state-of-the-art works in fine-grained classification, which usually involve more specialized techniques, such as object localization and local feature extraction.

\subsubsection{Stanford Online Products~(SOP)}
The Stanford Online Products~(SOP) dataset~\cite{sopdataset} consists of 22,634 products, with each product having between two and twelve photos from different perspectives, for a total of 120,053 images. In addition, there are 12 coarse classes based on the semantic categories of the products, such as 'bicycle' and 'kettle'. For the common image retrieval task, the category of the test set is possibly unknown, so we firstly extract a subset with an unknown test category based on the SOP dataset. We then selected the categories with eight or more images. We then split almost equally this subset to obtain the training set and the test set with 25,368 and 25,278 images, consisting of 2,517 and 2,518 classes respectively, denoted as SOP-split1. Moreover, we select all products with twelve images, and then randomly select ten train images and two test images. Thus, we have 1,498 classes with a total of 17,976 images~(2,996 test images and 14,980 train images), denoted as SOP-split2. 

\begin{table}[htbp]
\centering
\resizebox{0.95\linewidth}{!}{
\begin{tabular}{@{}lcccc@{}}
\toprule
Method         & Recall@1       & Recall@2       & Recall@5       & Recall@10      \\ \midrule
SelfCon        & 70.36          & 75.57          & 81.53          & 85.13          \\
Grafit         & 74.02          & 78.82          & 84.13          & 87.91          \\
SupCon         & 53.69          & 59.55          & 67.12          & 72.78          \\
CoIns          & 70.84          & 76.01          & 82.2           & 86.08          \\
SupCE          & 36.35          & 42.39          & 50.30          & 56.52          \\ \midrule
\textcolor{gray}{SupFINE}        & \textcolor{gray}{83.94}          & \textcolor{gray}{88.04}          & \textcolor{gray}{91.95}          & \textcolor{gray}{94.00}          \\ \midrule
MaskCon~(Ours) & \textbf{74.05}~\footnotesize{(\textcolor{green}{37.7$\uparrow$})} & \textbf{78.97} & \textbf{84.48} & \textbf{87.96} \\ \bottomrule
\end{tabular}
}
\caption{Results on SOP-split1 dataset.}
\label{tab:sopsplit1}
\end{table}


\begin{table}[htbp]
\centering
\resizebox{0.95\linewidth}{!}{
\begin{tabular}{@{}lcccc@{}}
\toprule
Method         & Recall@1       & Recall@2       & Recall@5       & Recall@10      \\ \midrule
SelfCon        & 35.85          & 41.46          & 49.77          & 56.11          \\
Grafit         & 39.12          & 44.66          & 53.10          & 59.65          \\
SupCon         & 25.07          & 29.24          & 35.85          & 41.59          \\
CoIns          & 38.22          & 45.19          & 54.37          & 61.18          \\
SupCE          & 22.56          & 26.34          & 33.28          & 38.95          \\ \midrule
\textcolor{gray}{SupFINE}        & \textcolor{gray}{69.56}          & \textcolor{gray}{75.70}          & \textcolor{gray}{83.24}          & \textcolor{gray}{87.88}          \\ \midrule
MaskCon~(Ours) & \textbf{45.36}~\footnotesize{(\textcolor{green}{22.8$\uparrow$})} & \textbf{51.07} & \textbf{58.91} & \textbf{65.52} \\ \bottomrule
\end{tabular}
}
\caption{Results on SOP-split2 dataset.}
\label{tab:sopsplit2}
\end{table}

\begin{figure*}[htbp]
    \centering
    \includegraphics[width=0.86\textwidth]{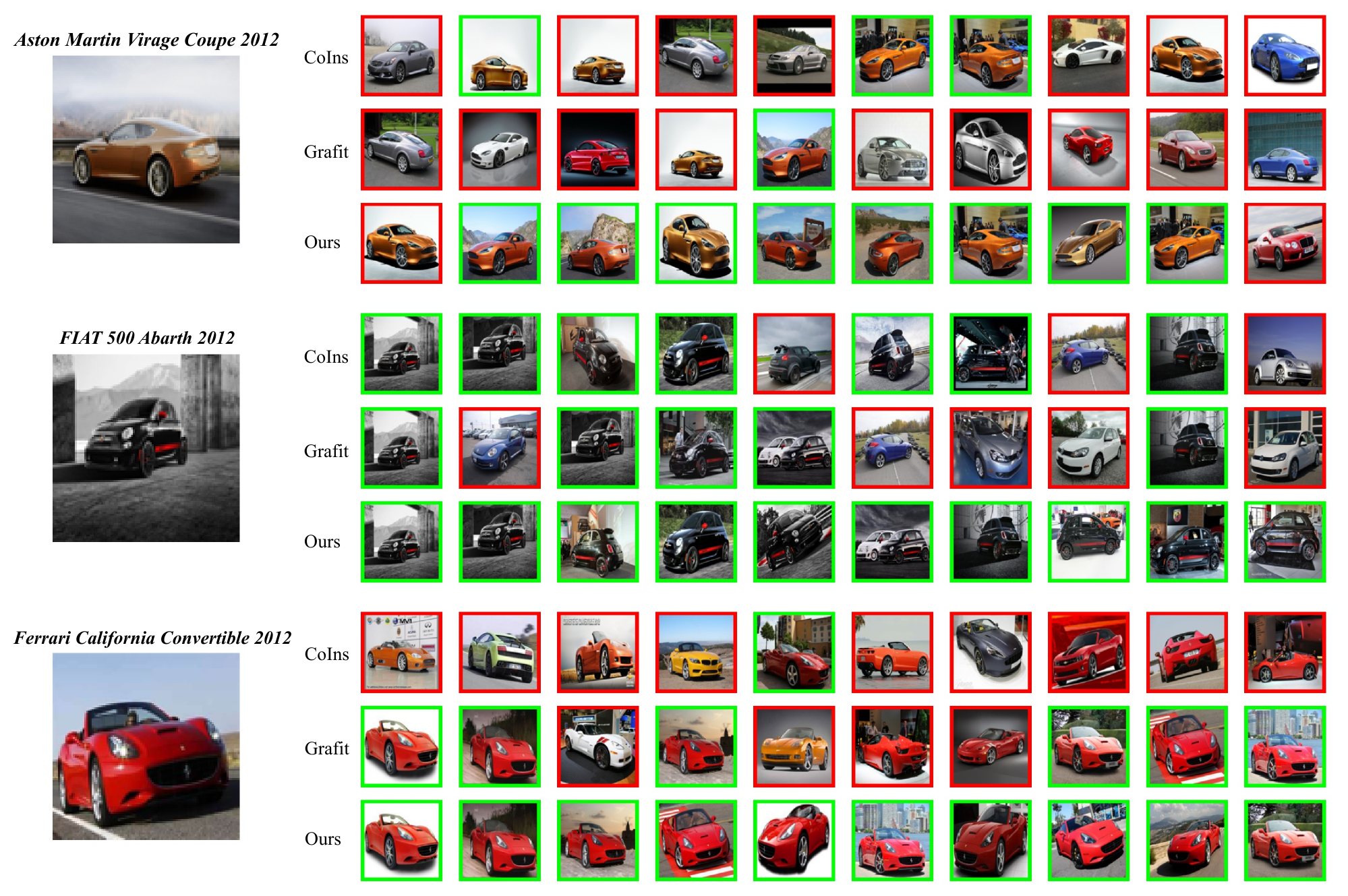}
    \caption{Examples of Top-10 image retrieval results on Cars196 dataset.}
    \label{fig:cars_retrieval}
\end{figure*}

In \cref{tab:sopsplit1} and \cref{tab:sopsplit2} we show results on SOP-split1 and SOP-split2, respectively. Similarly, regardless of whether the test class is known or not, our method still achieves significant improvements over state-of-the-art methods. 
An interesting phenomenon here is, that on the SOP dataset, SelfCon alone is significantly better than its SupCE/SupCon counterpart. We conjecture that this is due to the fact that each fine class of the SOP dataset in fact consists of different views of the same product, and the number of images per fine class is quite small~(2 to 12). For such a dataset, the classification of fine classes is closer to instance discrimination rather than the coarse classes classification.

\subsubsection{Stanford Cars196}
Stanford Cars196 is another widely used fine-grained image classification benchmark. As there are no official annotated coarse labels, we manually group 196 car models into 8 coarse classes based on the common types of cars: `0:~Cab', `1:~Sedan', `2:~SUV', `3:~Convertible', `4:~Coupe', `5:~Hatchback', `6:~Wagon' and `7:~Van'.

In \cref{tab:cars196}, our method again has the best performance. In addition, unlike in the SOP datasets, SupCon and SupCE have an overwhelming advantage over SelfCon. In this dataset, this is not surprising because for each fine class, there are more images~(ranging from 24 images to 68 images) with different colours/backgrounds in the cars dataset, leading to a much higher intra-class variance. Some image retrieval examples can be found in ~\cref{fig:cars_retrieval}.
\begin{table}[htbp]
\centering
\resizebox{0.95\linewidth}{!}{
\begin{tabular}{@{}lcccc@{}}
\toprule
Method         & Recall@1       & Recall@2       & Recall@5       & Recall@10      \\ \midrule
SelfCon        & 20.97          & 28.75          & 41.44          & 52.61          \\
Grafit         & 42.30          & 54.79          & 71.1           & 81.74          \\
SupCon         & 42.30          & 54.79          & 71.1           & 81.74          \\
CoIns          & 42.77          & 55.60          & 72.29          & 82.53          \\
SupCE          & 42.77          & 55.60          & 72.29          & 82.53          \\ \midrule
\textcolor{gray}{SupFINE}        & \textcolor{gray}{78.09}          & \textcolor{gray}{82.97}          & \textcolor{gray}{86.51}          & \textcolor{gray}{88.04}          \\ \midrule
MaskCon~(Ours) & \textbf{45.53}~\footnotesize{(\textcolor{green}{2.76$\uparrow$})} & \textbf{58.56} & \textbf{74.36} & \textbf{84.36} \\ \bottomrule
\end{tabular}
}
\caption{Results on Stanford Cars196 dataset.}
\label{tab:cars196}
\end{table}
\vspace{-5mm}

\subsubsection{Additional discussion}
Compared to the state-of-the-art, our method achieves the best results in all experiments. We would like to note here some of our insights from the results on the CIFAR and the fine-grained ones~(including ImageNet-1K). More specifically, we note that on the CIFAR datasets the performance of our method approaches that of the supervised one with fine labels~(SupFINE), while on the fine-grained datasets there is a larger gap.
We note that the critical question is to consider whether the instance discrimination task is useful for fine-grained classification in different scenarios, since coarse labelling usually does not lead to worse results.
Towards that, recent works have shown that the core idea of self-supervised contrastive learning --- augmentation invariance may be destructive for fine-grained tasks~\cite{xiao2020whatdoesnt_finegrained1, cole2022whendoes_finegrained2}. For example, the commonly applied colour distortion augmentation will promote the model to be non-sensitive to colour information. However, colour may be the key to discriminate between different breeds of birds. How to adapt the instance discrimination task to the fine-grained coarse-labelled dataset is a future direction we wish to pursue.



\section{Conclusion}
In this work, we propose a \textbf{Mask}ed \textbf{Con}trastive learning framework~(\textbf{MaskCon}) for learning fine-grained information with coarse-labelled datasets. On the basis of two baseline methods, we utilize coarse labels and the instance discrimination task to better estimate inter-sample relations. We show theoretically that our method can reduce the optimization error bound. Extensive experiments with various hyperparameter settings on multiple benchmarks, including the CIFAR datasets and the more challenging fine-grained classification datasets  show that our method achieves consistent and large improvement over the baselines. 

{\setlength{\parindent}{0.0cm}
\textbf{Acknowledgments:} This work was supported by the EU H2020 AI4Media No. 951911 project.
} 

{\small
\bibliographystyle{ieee_fullname}
\bibliography{egbib}

\begin{thebibliography}{10}\itemsep=-1pt

\bibitem{mixmatch}
David Berthelot, Nicholas Carlini, Ian Goodfellow, Nicolas Papernot, Avital
  Oliver, and Colin Raffel.
\newblock Mixmatch: A holistic approach to semi-supervised learning.
\newblock {\em arXiv preprint arXiv:1905.02249}, 2019.

\bibitem{coarse_fewshot}
Guy Bukchin, Eli Schwartz, Kate Saenko, Ori Shahar, Rogerio Feris, Raja Giryes,
  and Leonid Karlinsky.
\newblock Fine-grained angular contrastive learning with coarse labels.
\newblock In {\em Proceedings of the IEEE/CVF Conference on Computer Vision and
  Pattern Recognition (CVPR)}, pages 8730--8740, June 2021.

\bibitem{caron2018deepclustering}
Mathilde Caron, Piotr Bojanowski, Armand Joulin, and Matthijs Douze.
\newblock Deep clustering for unsupervised learning of visual features.
\newblock In {\em Proceedings of the European conference on computer vision
  (ECCV)}, pages 132--149, 2018.

\bibitem{swav}
Mathilde Caron, Ishan Misra, Julien Mairal, Priya Goyal, Piotr Bojanowski, and
  Armand Joulin.
\newblock Unsupervised learning of visual features by contrasting cluster
  assignments.
\newblock {\em Advances in Neural Information Processing Systems},
  33:9912--9924, 2020.

\bibitem{simclr}
Ting Chen, Simon Kornblith, Mohammad Norouzi, and Geoffrey Hinton.
\newblock A simple framework for contrastive learning of visual
  representations.
\newblock In {\em International conference on machine learning}, pages
  1597--1607. PMLR, 2020.

\bibitem{simclrv2}
Ting Chen, Simon Kornblith, Kevin Swersky, Mohammad Norouzi, and Geoffrey~E
  Hinton.
\newblock Big self-supervised models are strong semi-supervised learners.
\newblock {\em Advances in neural information processing systems},
  33:22243--22255, 2020.

\bibitem{chrabaszcz2017downsampled}
Patryk Chrabaszcz, Ilya Loshchilov, and Frank Hutter.
\newblock {\em arXiv preprint arXiv:1707.08819}, 2017.

\bibitem{cole2022whendoes_finegrained2}
Elijah Cole, Xuan Yang, Kimberly Wilber, Oisin Mac~Aodha, and Serge Belongie.
\newblock When does contrastive visual representation learning work?
\newblock In {\em Proceedings of the IEEE/CVF Conference on Computer Vision and
  Pattern Recognition}, pages 14755--14764, 2022.

\bibitem{dosovitskiy2014discriminative}
Alexey Dosovitskiy, Jost~Tobias Springenberg, Martin Riedmiller, and Thomas
  Brox.
\newblock Discriminative unsupervised feature learning with convolutional
  neural networks.
\newblock {\em Advances in neural information processing systems}, 27, 2014.

\bibitem{dwibedi2021nnclr}
Debidatta Dwibedi, Yusuf Aytar, Jonathan Tompson, Pierre Sermanet, and Andrew
  Zisserman.
\newblock With a little help from my friends: Nearest-neighbor contrastive
  learning of visual representations.
\newblock In {\em Proceedings of the IEEE/CVF International Conference on
  Computer Vision}, pages 9588--9597, 2021.

\bibitem{ericsson2021well}
Linus Ericsson, Henry Gouk, and Timothy~M Hospedales.
\newblock How well do self-supervised models transfer?
\newblock In {\em Proceedings of the IEEE/CVF Conference on Computer Vision and
  Pattern Recognition}, pages 5414--5423, 2021.

\bibitem{ascl}
Chen Feng and Ioannis Patras.
\newblock Adaptive soft contrastive learning.
\newblock In {\em 2022 26th International Conference on Pattern Recognition
  (ICPR)}, pages 2721--2727, 2022.

\bibitem{ssr}
Chen Feng, Georgios Tzimiropoulos, and Ioannis Patras.
\newblock Ssr: An efficient and robust framework for learning with unknown
  label noise.
\newblock In {\em 33rd British Machine Vision Conference 2022, {BMVC} 2022,
  London, UK, November 21-24, 2022}. {BMVA} Press, 2022.

\bibitem{gallardo2021self}
Jhair Gallardo, Tyler~L Hayes, and Christopher Kanan.
\newblock Self-supervised training enhances online continual learning.
\newblock {\em arXiv preprint arXiv:2103.14010}, 2021.

\bibitem{gidaris2019boosting}
Spyros Gidaris, Andrei Bursuc, Nikos Komodakis, Patrick P{\'e}rez, and Matthieu
  Cord.
\newblock Boosting few-shot visual learning with self-supervision.
\newblock In {\em Proceedings of the IEEE/CVF International Conference on
  Computer Vision}, pages 8059--8068, 2019.

\bibitem{gidaris2018rotpredict}
Spyros Gidaris, Praveer Singh, and Nikos Komodakis.
\newblock Unsupervised representation learning by predicting image rotations.
\newblock {\em arXiv preprint arXiv:1803.07728}, 2018.

\bibitem{nca}
Jacob Goldberger, Geoffrey~E Hinton, Sam Roweis, and Russ~R Salakhutdinov.
\newblock Neighbourhood components analysis.
\newblock {\em Advances in neural information processing systems}, 17, 2004.

\bibitem{moco}
Kaiming He, Haoqi Fan, Yuxin Wu, Saining Xie, and Ross Girshick.
\newblock Momentum contrast for unsupervised visual representation learning.
\newblock In {\em Proceedings of the IEEE/CVF Conference on Computer Vision and
  Pattern Recognition}, pages 9729--9738, 2020.

\bibitem{resnet}
Kaiming He, Xiangyu Zhang, Shaoqing Ren, and Jian Sun.
\newblock Deep residual learning for image recognition.
\newblock In {\em Proceedings of the IEEE conference on computer vision and
  pattern recognition}, pages 770--778, 2016.

\bibitem{supcon}
Prannay Khosla, Piotr Teterwak, Chen Wang, Aaron Sarna, Yonglong Tian, Phillip
  Isola, Aaron Maschinot, Ce Liu, and Dilip Krishnan.
\newblock Supervised contrastive learning.
\newblock {\em Advances in Neural Information Processing Systems},
  33:18661--18673, 2020.

\bibitem{komodakis2018unsupervised}
Nikos Komodakis and Spyros Gidaris.
\newblock Unsupervised representation learning by predicting image rotations.
\newblock In {\em International Conference on Learning Representations (ICLR)},
  2018.

\bibitem{meanshift}
Soroush~Abbasi Koohpayegani, Ajinkya Tejankar, and Hamed Pirsiavash.
\newblock Mean shift for self-supervised learning.
\newblock In {\em 2021 IEEE/CVF International Conference on Computer Vision
  (ICCV)}, pages 10306--10315, 2021.

\bibitem{cars196}
Jonathan Krause, Michael Stark, Jia Deng, and Li Fei-Fei.
\newblock 3d object representations for fine-grained categorization.
\newblock In {\em 4th International IEEE Workshop on 3D Representation and
  Recognition (3dRR-13)}, Sydney, Australia, 2013.

\bibitem{pseudolabelling}
Dong-Hyun Lee et~al.
\newblock Pseudo-label: The simple and efficient semi-supervised learning
  method for deep neural networks.
\newblock In {\em Workshop on challenges in representation learning, ICML},
  volume~3, page 896, 2013.

\bibitem{dividemix}
Junnan Li, Richard Socher, and Steven~CH Hoi.
\newblock Dividemix: Learning with noisy labels as semi-supervised learning.
\newblock {\em arXiv preprint arXiv:2002.07394}, 2020.

\bibitem{miller1998wordnet}
George~A Miller.
\newblock {\em WordNet: An electronic lexical database}.
\newblock MIT press, 1998.

\bibitem{oh2016recallretrieval}
Hyun Oh~Song, Yu Xiang, Stefanie Jegelka, and Silvio Savarese.
\newblock Deep metric learning via lifted structured feature embedding.
\newblock In {\em Proceedings of the IEEE conference on computer vision and
  pattern recognition}, pages 4004--4012, 2016.

\bibitem{cns}
Ruslan Salakhutdinov and Geoff Hinton.
\newblock Learning a nonlinear embedding by preserving class neighbourhood
  structure.
\newblock In {\em Artificial Intelligence and Statistics}, pages 412--419.
  PMLR, 2007.

\bibitem{fixmatch}
Kihyuk Sohn, David Berthelot, Chun-Liang Li, Zizhao Zhang, Nicholas Carlini,
  Ekin~D Cubuk, Alex Kurakin, Han Zhang, and Colin Raffel.
\newblock Fixmatch: Simplifying semi-supervised learning with consistency and
  confidence.
\newblock {\em arXiv preprint arXiv:2001.07685}, 2020.

\bibitem{sopdataset}
Hyun~Oh Song, Yu Xiang, Stefanie Jegelka, and Silvio Savarese.
\newblock Deep metric learning via lifted structured feature embedding.
\newblock In {\em IEEE Conference on Computer Vision and Pattern Recognition
  (CVPR)}, 2016.

\bibitem{touvron2021grafit}
Hugo Touvron, Alexandre Sablayrolles, Matthijs Douze, Matthieu Cord, and
  Herv{\'e} J{\'e}gou.
\newblock Grafit: Learning fine-grained image representations with coarse
  labels.
\newblock In {\em Proceedings of the IEEE/CVF International Conference on
  Computer Vision}, pages 874--884, 2021.

\bibitem{van2020scan}
Wouter Van~Gansbeke, Simon Vandenhende, Stamatios Georgoulis, Marc Proesmans,
  and Luc Van~Gool.
\newblock Scan: Learning to classify images without labels.
\newblock In {\em European conference on computer vision}, pages 268--285.
  Springer, 2020.

\bibitem{wang2021solving}
Guangrun Wang, Keze Wang, Guangcong Wang, Philip~HS Torr, and Liang Lin.
\newblock Solving inefficiency of self-supervised representation learning.
\newblock In {\em Proceedings of the IEEE/CVF International Conference on
  Computer Vision}, pages 9505--9515, 2021.

\bibitem{CLID}
Xudong Wang, Ziwei Liu, and Stella~X. Yu.
\newblock Unsupervised feature learning by cross-level instance-group
  discrimination.
\newblock In {\em 2021 IEEE/CVF Conference on Computer Vision and Pattern
  Recognition (CVPR)}, pages 12581--12590, 2021.

\bibitem{wei2020can}
Longhui Wei, Lingxi Xie, Jianzhong He, Jianlong Chang, Xiaopeng Zhang, Wengang
  Zhou, Houqiang Li, and Qi Tian.
\newblock Can semantic labels assist self-supervised visual representation
  learning?
\newblock {\em arXiv preprint arXiv:2011.08621}, 2020.

\bibitem{xiao2020whatdoesnt_finegrained1}
Tete Xiao, Xiaolong Wang, Alexei~A Efros, and Trevor Darrell.
\newblock What should not be contrastive in contrastive learning.
\newblock {\em arXiv preprint arXiv:2008.05659}, 2020.

\bibitem{xu2021coins}
Yuanhong Xu, Qi Qian, Hao Li, Rong Jin, and Juhua Hu.
\newblock Weakly supervised representation learning with coarse labels.
\newblock In {\em Proceedings of the IEEE/CVF International Conference on
  Computer Vision}, pages 10593--10601, 2021.

\bibitem{zhai2019s4l}
Xiaohua Zhai, Avital Oliver, Alexander Kolesnikov, and Lucas Beyer.
\newblock S4l: Self-supervised semi-supervised learning.
\newblock In {\em Proceedings of the IEEE/CVF International Conference on
  Computer Vision}, pages 1476--1485, 2019.

\bibitem{zhang2022hierarchical_alllabels}
Shu Zhang, Ran Xu, Caiming Xiong, and Chetan Ramaiah.
\newblock Use all the labels: A hierarchical multi-label contrastive learning
  framework.
\newblock In {\em Proceedings of the IEEE/CVF Conference on Computer Vision and
  Pattern Recognition}, pages 16660--16669, 2022.

\bibitem{zheng2021ressl}
Mingkai Zheng, Shan You, Fei Wang, Chen Qian, Changshui Zhang, Xiaogang Wang,
  and Chang Xu.
\newblock Ressl: Relational self-supervised learning with weak augmentation.
\newblock {\em Advances in Neural Information Processing Systems},
  34:2543--2555, 2021.

\end{thebibliography}
}
\onecolumn
\appendix

\section{Implementation and dataset details}\label{appendixa}
\subsection{Implementation details}

\paragraph{Hyperparameter settings}
For datasets except ImageNet-1K, we use the SGD optimizer with a momentum of 0.9, and train for 200 epochs with a batch size of 128, a learning rate of 0.02 and cosine annealing lr scheduler, and weight decay as 0.0005. For ImageNet-1K dataset, we train for 100 epochs with a batch size of 256. For contrastive learning, we use a memory bank with 8,192 elements, an MLP projector with a hidden dimension of 512, an output dimension of 128, and a temperature for the projection head as $\tau_0=0.1$. Moreover, we show the MaskCon-specific hyperparameters $\tau$ and $w$ corresponding to the reported numbers in the main paper below~(\cref{tab:hyperparameters}). This can be further validated in \cref{appendixb}.
\begin{table}[htbp]
\centering
\resizebox{0.4\linewidth}{!}{
\begin{tabular}{@{}l|cc|cc|cc@{}}
\toprule
\multirow{2}{*}{Dataset} & \multicolumn{2}{c|}{SOP-split1} & \multicolumn{2}{c|}{SOP-split2} & \multicolumn{2}{c}{Cars196} \\ \cmidrule(l){2-7} 
                         & $w$           & $\tau$          & $w$           & $\tau$          & $w$         & $\tau$        \\ \midrule
Grafit                   & 0.8           & /               & 0.8           & /               & 1           & /             \\
CoIns                    & 0.5           & /               & 0.2           & /               & 1           & /             \\
Ours                     & 0.8           & 0.1             & 0.5           & 0.05            & 1           & 0.1           \\ \bottomrule
\end{tabular}}
\caption{Hyperparameters in the main paper.}
\label{tab:hyperparameters}
\end{table}

\paragraph{Augmentation strategies}
The augmentation strategy is critical for contrastive learning, especially for self-supervised contrastive learning. Following ~\cite{ascl, zheng2021ressl}, we apply strong and weak augmentations for the query and key view respectively. As discussed in the main paper, the strong augmentation may be destructive for fine-grained image classification. In particular, considering that each class in the SOP dataset consists of different views of the same product, we, therefore, use weak image augmentations for both query and key view. We list the applied data augmentations for each dataset as below in the \textsc{pytorch} format, the complete repo will be released upon acceptance.

\noindent\textit{\textbf{Strong augmentation}}
\begin{lstlisting}[language=Python]
if dataset == 'cars196':
    strong_transform = transforms.Compose([
    transforms.RandomResizedCrop(size, (0.2, 1)),
    transforms.RandomHorizontalFlip(),
    transforms.RandomPerspective(0.5, 0.5),
    transforms.RandomApply([transforms.ColorJitter(0.4, 0.4, 0.4, 0.1)], p=0.8),
    transforms.RandomGrayscale(p=0.2),
    transforms.RandomApply(
    [GaussianBlur([.1, 2.])], p=0.5),
    transforms.ToTensor(),
    normalize,
    ])
elif dataset == 'sop_split1' or dataset == 'sop_split2':
    strong_transform = transforms.Compose([
    transforms.RandomResizedCrop(size, (0.2, 1)),
    transforms.RandomHorizontalFlip(),
    transforms.RandomPerspective(0.5, 0.5),
    transforms.ToTensor(),
    normalize,
    ])
else:
    strong_transform = transforms.Compose([
    transforms.RandomResizedCrop(size, (0.2, 1)),
    transforms.RandomHorizontalFlip(),
    CIFAR10Policy(),
    transforms.ToTensor(),
    normalize,
    ])
\end{lstlisting}
\noindent\textit{\textbf{Weak augmentation}}
\begin{lstlisting}[language=Python]
if dataset == 'cars196' or dataset == 'sop_split1' or dataset == 'sop_split2':
    weak_transform = transforms.Compose([
    transforms.RandomResizedCrop(size, (0.2, 1)),
    transforms.RandomPerspective(0.5, 0.5),
    transforms.RandomHorizontalFlip(),
    transforms.ToTensor(),
    normalize
    ])
else:
    weak_transform = transforms.Compose([
    transforms.RandomResizedCrop(size, (0.2, 1)),
    transforms.RandomHorizontalFlip(),
    transforms.ToTensor(),
    normalize,
    ])
\end{lstlisting}

\noindent\textit{\textbf{Test augmentation}}
\begin{lstlisting}[language=Python]
test_transform = transforms.Compose([
    transforms.Resize(size),
    transforms.ToTensor(),
    normalize,
    ])
\end{lstlisting}

\subsection{Dataset details}
\paragraph{Stanford Online Products~(SOP)}
For Stanford Online Products~(SOP) datasets, we artificially generate two different splits. There are very limited images for each product in the original SOP datasets, ranging from 2 to 12~(\cref{fig:sop_statistics}). Considering an extreme case, that is, for each product there is only one view image, which will lead us to self-supervised learning based on instance discrimination. To avoid this, we thus select those classes with more images. More specifically, for SOP-split1, we select the classes with eight or more images and obtain a subset consisting of 5,035 classes. We evenly split~(2,517 train classes and 2,518 test classes) this subset to obtain the training set and the test set with 25,368 and 25,278 images, respectively. For SOP-split2, we select all products with twelve images, and then randomly select ten train images and two test images. Thus, we have 1,498 classes with a total of 17,976 images~(2,996 test images and 14,980 train images), respectively.
\begin{figure}[htbp]
    \centering
    \includegraphics[width=0.4\linewidth]{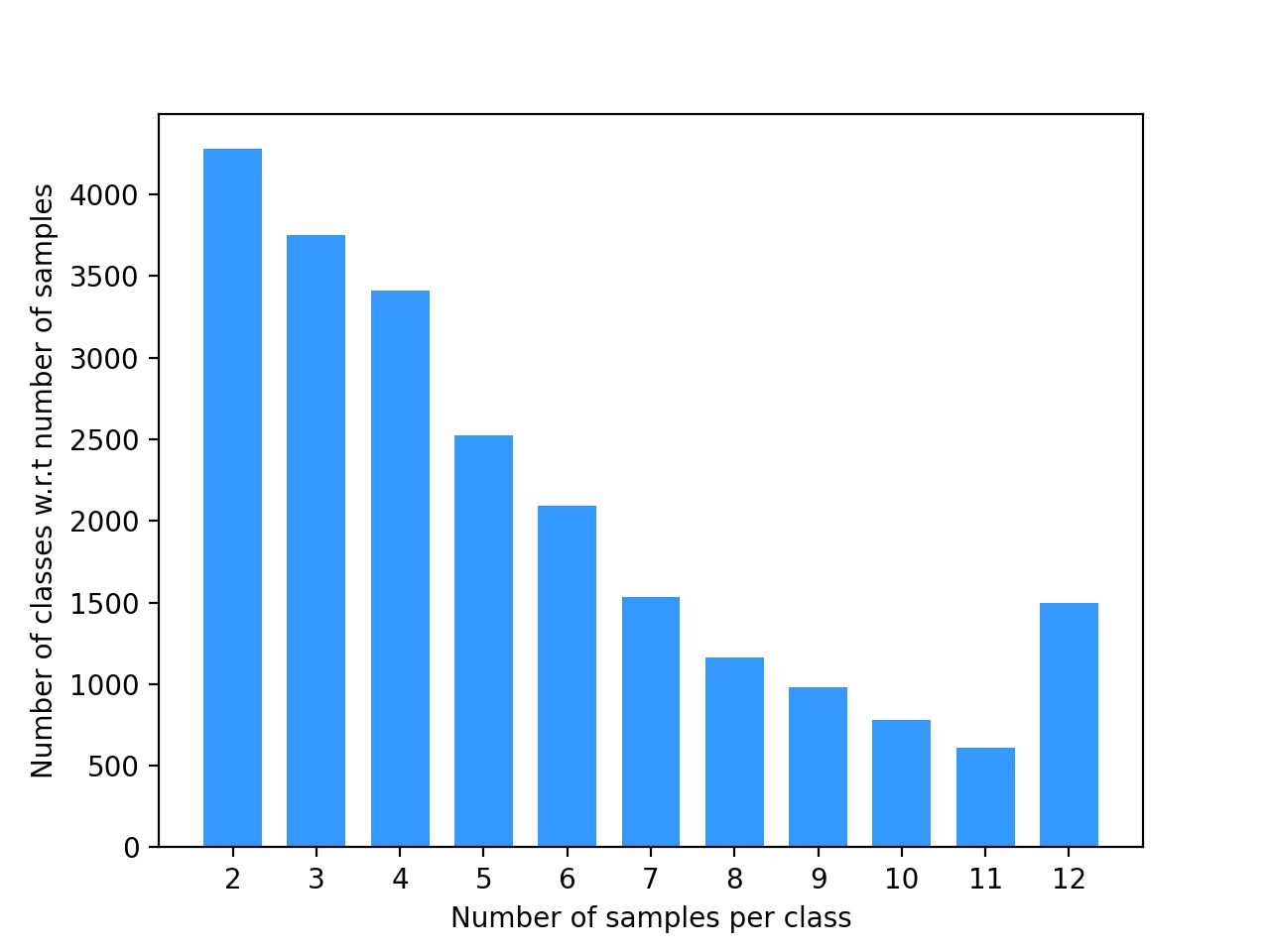}
    \caption{Statistics on SOP dataset.}
    \label{fig:sop_statistics}
\end{figure}


\begin{figure*}[htbp]
    \centering
    \includegraphics[width=\textwidth]{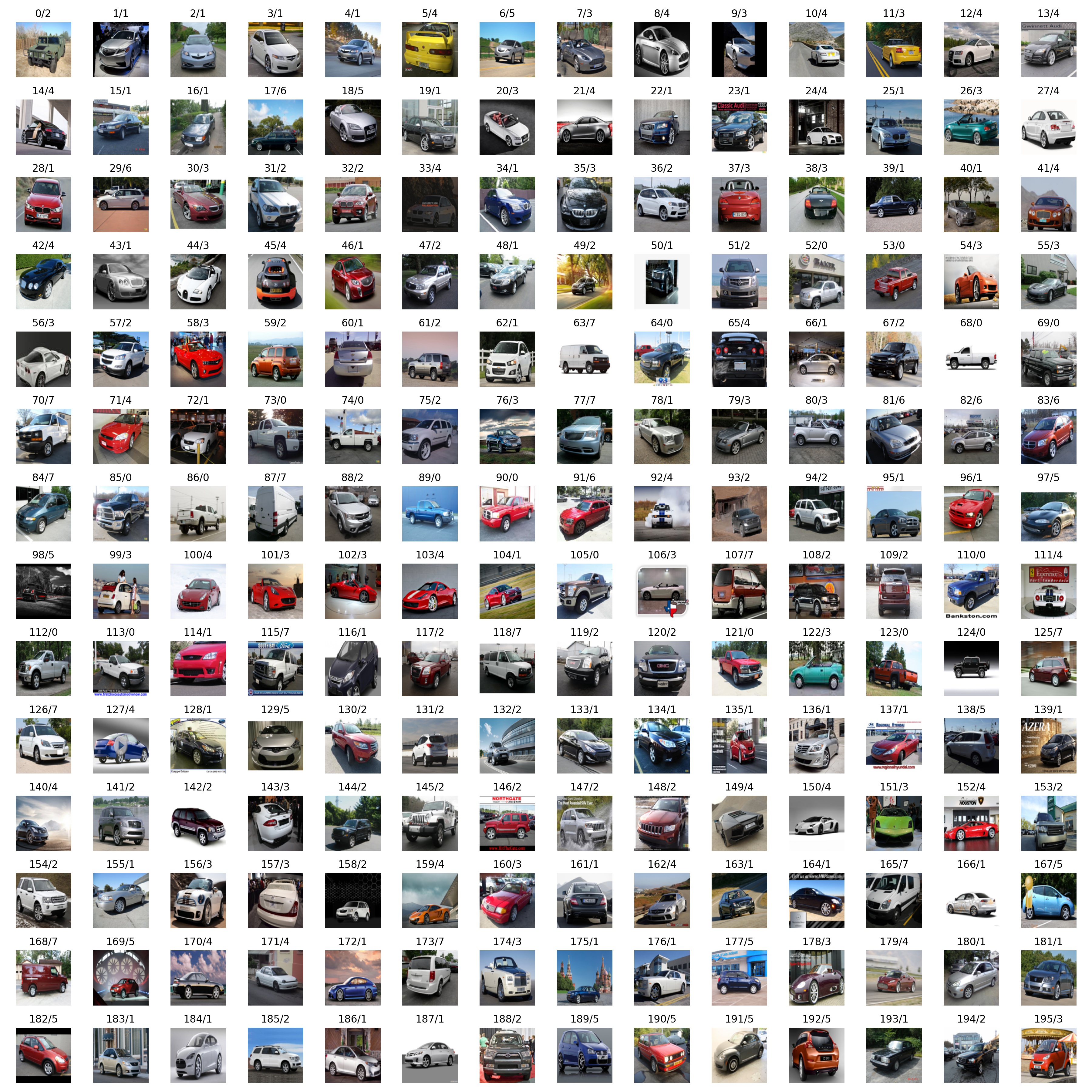}
    \caption{Coarse labels for each of the fine classes---each image captioned as \{fine label\}/\{coarse label\}.}
    \label{fig:coarse_cars196}
\end{figure*}
\paragraph{Stanford Cars196}
For the Cars196 dataset, similarly, there are no official coarse labels. For this reason, we manually group 196 car models into 8 coarse classes based on the common types of cars: `0:~Cab', `1:~Sedan', `2:~SUV', `3:~Convertible', `4:~Coupe', `5:~Hatchback', `6:~Wagon' and `7:~Van'.
\begin{table}[htbp]
\centering
\begin{tabular}{@{}cccccccc@{}}
\toprule
0  & 1  & 2  & 3  & 4  & 5  & 6 & 7  \\ \midrule
18 & 50 & 36 & 30 & 29 & 14 & 7 & 12 \\ \bottomrule
\end{tabular}
\caption{Number of fine classes in each coarse class on Cars196.}
\label{tab:carssplit}
\end{table}
In \cref{fig:coarse_cars196}, we provide some example images from original fine classes to coarse classes for the Stanford Cars196 dataset. 

\paragraph{ImageNet-1K}
In this section, we evaluate our method on the large-scale ImageNet-1K dataset. For efficiency, we experiment with the downsampled version of ImageNet-1K dataset~\cite{chrabaszcz2017downsampled}, with each sample resized to 32$\times$32. 
\begin{table}[htbp]
\centering
\begin{tabular}{@{}cccccccccccc@{}}
\toprule
0  & 1   & 2  & 3  & 4  & 5   & 6  & 7  & 8   & 9  & 10 & 11 \\ \midrule
61 & 123 & 59 & 95 & 60 & 130 & 70 & 53 & 105 & 71 & 93 & 80 \\ \bottomrule
\end{tabular}
\caption{Number of fine classes in each coarse class.}
\label{tab:imagenetsplit}
\end{table}
Since there are no officially coarse labels for imagenet, we here introduce coarse labels based on the WordNet~\cite{miller1998wordnet} hierarchy so as to artificially group the whole dataset into 12 coarse classes: `0:~Invertebrate', `1:~Domestic animal', `2:~Bird', `3:~Mammal', `4:~Reptile/Aquatic vertebrate', `5:~Device', `6:~Vehicle', `7:~Container', `8:~Instrument', `9:~Artifact', `10:~Clothing' and `11:~Others'. 

\paragraph{Class imbalance}
We note that the number of fine classes in each coarse class is highly imbalanced~(\cref{tab:imagenetsplit}, \cref{tab:carssplit}). In this work, we do not explicitly deal with it~(for e.g., dataset resampling), as we believe, that such class imbalance can be common in the targeted problem setting, and we aim to test our method in such more realistic setting. 

\section{Additional experiments and results}\label{appendixb}

As mentioned in the main content, we perform a search for the hyperparameters for both Grafit and CoIns and report the best results. Here, in~\cref{tab:cifar20},~\cref{tab:sop-split1},~\cref{tab:sop-split2},~\cref{tab:cifartoy} and \cref{tab:cars196}, we report the results with so additional different hyperparameter settings.

Please note that we do not investigate fewer values for the hyperparameter $w$ in comparison to Grafit and CoIns. From the tables we can see that with appropriate weights, both Grafit and CoIns get considerable improvements over the supervised and self-supervised only baselines. Our method, achieves even higher improvements with an appropriate $\tau$.
\begin{table}[htbp]
\centering
\begin{tabular}{@{}l|c|c|cccc@{}}
\toprule
Method                  & $w$ & $\tau$ & Recall@1       & Recall@2       & Recall@5       & Recall@10      \\ \midrule
SupFINE                 & /   & /      & 71.13          & 80.03          & 87.61          & 91.59          \\ \midrule
\multirow{4}{*}{Grafit} & 0.2 & /      & 47.80          & 59.41          & 73.29          & 82.17          \\
                        & 0.5 & /      & 57.78          & 68.24          & 80.01          & 87.05          \\
                        & 0.8 & /      & \textbf{60.57} & \textbf{71.13} & \textbf{82.32} & \textbf{89.21} \\
                        & 1.0 & /      & 58.65          & 70.04          & 82.18          & 89.09          \\ \midrule
\multirow{4}{*}{CoIns}  & 0.2 & /      & 56.93          & 68.51          & 80.90          & 87.88          \\
                        & 0.5 & /      & \textbf{60.10} & \textbf{70.89} & \textbf{83.14} & \textbf{89.52} \\
                        & 0.8 & /      & 59.35          & 70.40          & 83.19          & 89.82          \\
                        & 1.0 & /      & 47.25          & 61.24          & 77.78          & 87.00          \\ \midrule
MaskCon                 & 0.5 & 0.01   & 54.22          & 65.31          & 77.45          & 85.12          \\
                        & 0.5 & 0.05   & 58.89          & 68.91          & 80.02          & 86.89          \\
                        & 0.5 & 0.1    & 60.18          & 70.41          & 81.81          & 88.25          \\
                        & 0.5 & 0.5    & 57.46          & 68.27          & 79.86          & 87.05          \\
                        & 1   & 0.01   & 61.41          & 71.22          & 81.44          & 87.70          \\
                        & 1   & 0.05   & \textbf{65.52} & \textbf{74.46} & \textbf{83.64} & \textbf{89.25} \\
                        & 1   & 0.1    & 62.52          & 72.51          & 83.27          & 89.18          \\
                        & 1   & 0.5    & 59.76          & 70.28          & 81.97          & 88.58          \\ \midrule
SelfCon                 & /   & /      & 40.50          & 51.83          & 66.23          & 76.66          \\ \bottomrule
\end{tabular}
\caption{Extra results on CIFAR100 dataset.}
\label{tab:cifar20}
\end{table}

\begin{table}[htbp]
\centering
\begin{tabular}{@{}l|c|c|cccc@{}}
\toprule
Method                  & $w$ & $\tau$ & Recall@1       & Recall@2       & Recall@5       & Recall@10      \\ \midrule
SupFINE                 & /   & /      & 83.94          & 88.04          & 91.95          & 94.00          \\ \midrule
\multirow{4}{*}{Grafit} & 0.2 & /      & 71.04          & 76.17          & 81.85          & 85.51          \\
                        & 0.5 & /      & 72.66          & 77.43          & 82.89          & 86.40          \\
                        & 0.8 & /      & \textbf{74.02} & \textbf{78.82} & \textbf{84.13} & \textbf{87.91} \\
                        & 1.0 & /      & 53.69          & 59.55          & 67.12          & 72.78          \\ \midrule
\multirow{4}{*}{CoIns}  & 0.2 & /      & 70.75          & 76.32          & 82.24          & 86.16          \\
                        & 0.5 & /      & \textbf{70.84} & \textbf{76.01} & \textbf{82.20} & \textbf{86.08} \\
                        & 0.8 & /      & 67.68          & 73.26          & 80.04          & 84.18          \\
                        & 1.0 & /      & 36.35          & 42.39          & 50.30          & 56.52          \\ \midrule
MaskCon                 & 0.2 & 0.1    & 72.35          & 77.34          & 82.92          & 86.46          \\
                        & 0.5 & 0.05   & 70.87          & 76.17          & 82.11          & 85.92          \\
                        & 0.5 & 0.1    & 73.22          & 78.34          & 83.98          & 87.35          \\
                        & 0.8 & 0.05   & 68.53          & 73.72          & 79.86          & 84.22          \\
                        & 0.8 & 0.1    & \textbf{74.05} & \textbf{78.97} & \textbf{84.48} & \textbf{87.96} \\ \midrule
SelfCon                 & /   & /      & 70.36          & 75.57          & 81.53          & 85.13          \\ \bottomrule
\end{tabular}
\caption{Extra results on SOP-split1 dataset.}
\label{tab:sop-split1}
\end{table}

\begin{table*}[htbp]
\centering
\begin{tabular}{@{}l|c|c|cccc|cccc@{}}
\toprule
\multirow{2}{*}{Method}  & \multirow{2}{*}{$w$} & \multirow{2}{*}{$\tau$} & \multicolumn{4}{c|}{CIFARtoy~(good split)}                        & \multicolumn{4}{c}{CIFARtoy~(bad split)}                          \\ \cmidrule(l){4-11} 
                         &                      &                         & Recall@1       & Recall@2       & Recall@5       & Recall@10      & Recall@1       & Recall@2       & Recall@5       & Recall@10      \\ \midrule
SupFINE                  & /                    & /                       & 94.14          & 96.61          & 98.03          & 98.65          & 94.11          & 96.53          & 98.45          & 98.96          \\ \midrule
\multirow{4}{*}{Grafit}  & 0.2                  & /                       & 85.34          & 91.38          & 96.05          & 97.85          & 86.33          & 92.31          & 96.63          & 98.18          \\
                         & 0.5                  & /                       & \textbf{86.61} & \textbf{92.33} & \textbf{97.01} & \textbf{98.38} & 87.94          & 93.63          & 97.59          & 98.76          \\
                         & 0.8                  & /                       & 86.23          & 92.56          & 96.69          & 98.50          & \textbf{89.96} & \textbf{94.36} & \textbf{97.71} & \textbf{98.90} \\
                         & 1.0                  & /                       & 73.84          & 84.25          & 92.14          & 95.46          & 84.66          & 90.93          & 95.15          & 96.71          \\ \midrule
\multirow{4}{*}{CoIns}   & 0.2                  & /                       & 85.74          & 91.88          & 96.41          & 98.60          & 89.80          & 94.13          & 97.18          & 98.30          \\
                         & 0.5                  & /                       & 86.00          & 92.25          & 96.91          & 98.60          & 90.14          & 94.11          & 97.78          & 98.70          \\
                         & 0.8                  & /                       & \textbf{86.15} & \textbf{92.76} & \textbf{97.21} & \textbf{98.76} & \textbf{90.55} & \textbf{94.94} & \textbf{97.73} & \textbf{98.71} \\
                         & 1.0                  & /                       & 76.30          & 8.26           & 94.65          & 97.46          & 87.15          & 92.85          & 96.78          & 98.34          \\ \midrule
\multirow{8}{*}{MaskCon} & 0.5                  & 0.01                    & 87.44          & 93.01          & 96.95          & 98.38          & 88.89          & 93.84          & 97.20          & 98.38          \\
                         & 0.5                  & 0.05                    & 88.09          & 93.18          & 96.99          & 98.51          & 90.21          & 94.41          & 97.41          & 98.56          \\
                         & 0.5                  & 0.1                     & 87.13          & 92.95          & 96.93          & 98.44          & 88.29          & 93.61          & 97.25          & 98.54          \\
                         & 0.5                  & 0.5                     & 86.31          & 92.49          & 96.74          & 98.53          & 88.46          & 93.43          & 97.63          & 98.70          \\
                         & 1                    & 0.01                    & 88.24          & 93.44          & 97.01          & 98.26          & 90.53          & 94.23          & 97.51          & 98.55          \\
                         & 1                    & 0.05                    & \textbf{90.28} & \textbf{94.04} & \textbf{97.33} & \textbf{98.53} & \textbf{91.56} & \textbf{95.23} & \textbf{97.70} & \textbf{98.70} \\
                         & 1                    & 0.1                     & 78.25          & 87.10          & 94.26          & 96.85          & 88.44          & 93.65          & 97.10          & 98.34          \\
                         & 1                    & 0.5                     & 79.58          & 88.10          & 94.58          & 96.99          & 88.25          & 93.33          & 96.99          & 98.19          \\ \midrule
SelfCon                  & /                    & /                       & 84.84          & 91.73          & 96.14          & 97.94          & 84.83          & 91.55          & 96.35          & 98.16          \\ \bottomrule
\end{tabular}
\caption{Extra results on CIFARtoy dataset.}
\label{tab:cifartoy}
\end{table*}

\begin{table}[htbp]
\centering
\begin{tabular}{@{}l|c|c|cccc@{}}
\toprule
Method                  & $w$ & $\tau$ & Recall@1       & Recall@2       & Recall@5       & Recall@10      \\ \midrule
SupFINE                 & /   & /      & 69.56          & 75.70          & 83.24          & 87.88          \\ \midrule
\multirow{4}{*}{Grafit} & 0.2 & /      & 36.85          & 42.52          & 50.23          & 57.04          \\
                        & 0.5 & /      & 37.98          & 43.69          & 51.67          & 57.98          \\
                        & 0.8 & /      & \textbf{39.12} & \textbf{44.66} & \textbf{53.10} & \textbf{59.65} \\
                        & 1.0 & /      & 25.07          & 29.24          & 35.85          & 41.59          \\ \midrule
\multirow{4}{*}{CoIns}  & 0.2 & /      & \textbf{38.22} & \textbf{45.19} & \textbf{54.37} & \textbf{61.18} \\
                        & 0.5 & /      & 38.18          & 45.06          & 54.17          & 61.28          \\
                        & 0.8 & /      & 36.72          & 42.49          & 51.23          & 58.61          \\
                        & 1.0 & /      & 22.56          & 26.34          & 33.28          & 38.95          \\ \midrule
MaskCon                 & 0.2 & 0.05   & 39.69          & 45.69          & 53.70          & 60.05          \\
                        & 0.5 & 0.05   & \textbf{45.36} & \textbf{51.07} & \textbf{58.91} & \textbf{65.52} \\
                        & 0.8 & 0.05   & 43.84          & 50.03          & 58.18          & 64.19          \\
                        & 1.0 & 0.05   & 41.05          & 46.70          & 54.74          & 60.98          \\ \midrule
SelfCon                 & /   & /      & 35.85          & 41.46          & 49.77          & 56.11          \\ \bottomrule
\end{tabular}
\caption{Extra results on SOP-split2 dataset.}
\label{tab:sop-split2}
\end{table}

\begin{table}[htbp]
\centering
\begin{tabular}{@{}l|c|c|cccc@{}}
\toprule
Method                  & $w$ & $\tau$ & Recall@1       & Recall@2       & Recall@5       & Recall@10      \\ \midrule
SupFINE                 & /   & /      & 78.09          & 82.97          & 86.51          & 88.04          \\ \midrule
\multirow{4}{*}{Grafit} & 0.2 & /      & 23.64          & 32.69          & 46.35          & 57.98          \\
                        & 0.5 & /      & 28.83          & 38.96          & 53.99          & 65.08          \\
                        & 0.8 & /      & 35.58          & 47.97          & 64.47          & 75.28          \\
                        & 1.0 & /      & \textbf{42.30} & \textbf{54.79} & \textbf{71.10} & \textbf{81.74} \\ \midrule
\multirow{4}{*}{CoIns}  & 0.2 & /      & 32.30          & 43.19          & 59.11          & 71.10          \\
                        & 0.5 & /      & 36.71          & 48.80          & 65.33          & 76.16          \\
                        & 0.8 & /      & 38.71          & 49.04          & 67.28          & 79.23          \\
                        & 1.0 & /      & \textbf{42.77} & \textbf{55.60} & \textbf{72.29} & \textbf{82.53} \\ \midrule
MaskCon                 & 1   & 0.01   & 27.87          & 38.19          & 52.80          & 64.13          \\
                        & 1   & 0.05   & 33.04          & 43.69          & 59.10          & 70.59          \\
                        & 1   & 0.1    & \textbf{45.53} & \textbf{58.56} & \textbf{74.36} & \textbf{84.36} \\
                        & 1   & 0.5    & 42.84          & 55.23          & 71.57          & 81.72          \\ \midrule
SelfCon                 & /   & /      & 20.97          & 28.75          & 41.44          & 52.61          \\ \bottomrule
\end{tabular}
\caption{Extra results on Stanford Cars196 dataset.}
\label{tab:cars196}
\end{table}


\end{document}